\documentclass[12pt, letterpaper]{article}

\usepackage{amsmath,amsthm,amssymb}
\usepackage{listings}
\usepackage{graphicx}
\usepackage{python}
\usepackage[font=scriptsize]{caption}
\usepackage{subcaption}
 \usepackage{here}
\usepackage{color}
\usepackage[letterpaper, margin=1in]{geometry}
\usepackage{setspace}
\usepackage{titling,lipsum}
\doublespacing
\usepackage[dvipsnames,svgnames,table]{xcolor}
\usepackage{hyperref}
\usepackage{multicol}
\usepackage{mathtools}
\usepackage{float}
\usepackage{booktabs}
\usepackage{natbib}
\usepackage{mathtools}
\usepackage{preview}
\usepackage{standalone}
\usepackage[capitalize,nameinlink]{cleveref}
\usepackage[toc,page]{appendix}
\usepackage{algorithm}
\usepackage{tikz}
\usepackage{multirow}
\setlength{\droptitle}{-2cm}


\hypersetup{colorlinks,linkcolor={Blue},citecolor={Blue},urlcolor={Blue}}
\captionsetup[figure]{labelfont={color=blue}}
\captionsetup[table]{labelfont={color=blue}}

\newtheorem{thm}{Theorem}

\theoremstyle{definition}

\newcommand{\argmin}{\operatornamewithlimits{argmin}}

\newcommand{\tr}{\operatornamewithlimits{Trace}}

\newcommand{\bz}{\bold{z}}
\newcommand{\by}{\bold{y}}
\newcommand{\bx}{\bold{x}}
\newcommand{\bu}{\bold{u}}

\newcommand{\bX}{\bold{X}}
\newcommand{\bU}{\bold{U}}
\newcommand{\px}{\pmb{x}}

\newcommand{\bbeta}{\boldsymbol{\beta}}
\newcommand{\beps}{\boldsymbol{\varepsilon}}

\newcommand{\bK}{\bold{K}}

\newcommand{\bk}{\bold{k}}

\newcommand{\PPhi}{\boldsymbol{\Phi}}

\title{Extrinsic Kernel Ridge Regression Classifier for Planar Kendall Shape Space}

\author{
  Hwiyoung Lee\\
  \texttt{hwiyoung.lee@stat.fsu.edu}
  \and
  Vic Patrangenaru\\
  \texttt{vic@stat.fsu.edu}
}

\date{Department of Statistics, Florida State University\\ \today}

\begin{document}

\maketitle

\begin{abstract}
\onehalfspacing
\selectfont
Kernel methods have had great success in Statistics and Machine Learning. Despite their growing popularity, however, less effort has been drawn towards developing kernel based classification methods on Riemannian manifolds due to difficulty in dealing with non-Euclidean geometry. In this paper, motivated by the extrinsic framework of manifold-valued data analysis, we propose a new positive definite kernel on planar Kendall shape space $\Sigma_2^k$, called extrinsic Veronese Whitney Gaussian kernel. We show that our approach can be extended to develop Gaussian kernels on any embedded manifold. Furthermore, kernel ridge regression classifier (KRRC) is implemented to address the shape classification problem on $\Sigma_2^k$, and their promising performances are illustrated through the real data analysis. \vspace{.5em}

\noindent{\bf Key words:} Kendall's Planar Shape space, Riemannian Manifold, Kernel Ridge Regression Classifier, Kernel method, Extrinsic Data Analysis
\end{abstract}


\section{Introduction}

Classification has been one of the main research topics for quite a long time in statistics and machine learning literature. A classification problem can be generally formulated as follows. Given a training data set $\{(\mathbf{x}_i,\by_i)\}_{i=1}^n$, where $\mathbf{x}_i \in \mathcal{X}$, $\by_i \in \mathcal{Y}$ and $\mathcal{Y}$ is a finite discrete set, we consider the following model $\by_i = f(\mathbf{x}_i) + \beps_i$. The goal is to construct a classifier $f : \mathcal{X} \rightarrow \mathcal{Y}$ that the class label of a new testing point $\bx \in \mathcal{X}$ can be successfully predicted by the model with good accuracy. As reflected in the number of articles published, many classification methods have been intensively studied on Euclidean space, whereas much less attention has been paid to non Euclidean spaces. In new types of data analysis emerged in recent years, however, analyzing non-Euclidean data, mostly manifold-valued data, has attracted great interest due to their potential applications in a various scientific fields. Examples of such data types include directions of points on a sphere, planar or 3D shapes, diffusion tensor magnetic resonance images (DT-MRI), CT images and other types of medical images. Despite the recent popularity of non-Euclidean data analysis, implementing existing classification methods such as support vector machines, multiclass logistic regression, K-nearest neighbors and their variants, under non-Euclidean settings has remained largely unaddressed. Because of the restriction requiring Euclidean forms placed on predictors, the impracticability of directly applying the aforesaid Euclidean classification methods to such data types is inevitable. For instance, suffering from the difficulty for defining the notion of a centroid and choosing a proper distance on non-Euclidean space, K-nearest neighbors method is not immediately feasible. A number of different statistical methods have emerged in an attempt to counter this difficulty in non-Euclidean data analysis.

In this paper, we adopt a kernelized method as part of efforts to tackle this problem. Specifically, we work on developing a kernel based classifier on Kendall's planar shape space which is one of the popular Riemannian manifolds that has been intensively studied in the object data and shape analysis literatures \citep{WaMa:2007, MaAl:2014, PaEl:2015, DrMa:2016}. While most of the focus in a rich literature on Kendall's planar shape space mainly has been on statistical inference based on various types of metrics defined on this space, there has been relatively scant research conducted with kernels (\citet{JaSaLiHa:2013}, \citet{LiMuChDu:2018}). Thus in this paper, by taking advantage of the appealing flexibility and adaptability of the kernelized method we propose a new method which we call extrinsic kernel ridge regression classifier. Though, our main contribution of this paper is establishing a positive definite kernel on Kendall's planar shape space, similar strategies can be applied in a straightforward manner to other manifold valued data.

In order to establish a link between kernelized methods and shape analysis, let us first briefly recall the kernelized methods. Apart from a great surge of interest in analyzing data on manifolds, in the other areas of Statistics and Machine Learning, kernel methods have been successfully incorporated into a number of learning methods for which the algorithm depends only on the inner product of the data. Well-known examples of the kernelized methods include kernel ridge regression (KRRC), kernel Fisher discriminant analysis \citep[KFDA;][]{MiRaWeScMu:1999}, kernel principal component analysis \citep[KPCA;][]{ScSm1997}, and support vector machine (SVM) (For more examples, see \citet{ScSm:2002, HoScSm:2008}). Two main benefits of kernel methods are summarized as follows. (i) The effectiveness of extending linear methods to nonlinear methods. More precisely speaking, keeping a given learning algorithm unchanged, the kernel method enables converting a linear method to its nonlinear counterpart. Toward this end, we first assume that data from the original space is transformed into a high dimensional feature space via a nonlinear feature map. Then on that space the same algorithm can be carried out by substituting the inner product in the original space with that of the feature space. Due to what is called the ``kernel trick'', i.e., the inner products of the high dimensional feature space can be effectively computed in the original input space via the kernel function, a nonlinearity is obtained at no additional computational cost. (ii) The availability of applying the Euclidean algorithms to non-Euclidean spaces over which positive definite kernels can be defined. Informally speaking a kernel function is a similarity measure between two objects $\bx_i$ and $\bx_j$ on an input space $\mathcal{X}$. And because no assumption is made about the underlying structures of $\mathcal{X}$, kernels can be defined on arbitrary spaces. Thus, with this in consideration, the kernel method is not limited to an Euclidean vector space, but can be applied to various types of data domains such as sequence, networks, graphs, text, images, as well as manifolds. The methods can be theoretically justified by the theory of Reproducing Kernel Hilbert Space (RKHS), provided that the kernel function is positive definite. In light of those benefits, the kernel method should be considered as the first attempt to provide an efficient way to dealing with classification problems over non-Euclidean space.

The rest of this paper is organized as follows. In \cref{sec:2}, some background information and preliminaries are presented. In \cref{sec:KRRC}, we describe the kernel ridge regression classifier (KRRC) on the planar shape space $\Sigma_2^k$. In \cref{sec:extrinsic}, we develop the positive definite kernel on $\Sigma_2^k$, which we call extrinsic Veronese Whitney Gaussian kernel. In \cref{sec:simulation}, we illustrate our proposed methods on a real dataset. The paper is then concluded in \cref{sec:conclusion} with a discussion on future research directions.

\section{Preliminaries} \label{sec:2}

In this section, we present some necessary definitions and preliminary concepts that will be used throughout this paper. Let us start by recalling regression classifiers based on the subspace learning methods.

\subsection{Regression Classifiers}
Subspace learning methods for object classification \citep{BeHeKr1997, BaJa2003} have been extensively studied over the last few decades and successfully applied in many different settings within the field of face recognition. One pioneering and influential subspace classification method is linear regression classifier (LRC) developed by \citet{NaTo:2010}. The fundamental idea behind LRC is that a random object which belongs to a specific class is assumed to lie on a linear subspace spanned by observations in that class. More specifically, suppose that we have $C$ classes and each class has $n_i$ subsamples in the $p$-dimensional space. Then, according to the subspace assumption, the new data $\bx \in \mathbb{R}^p$ which belongs to the $i$ th class can be expressed in terms of a linear combination of the training sets from the $i$ th class; $\bx = \bX_{(i)}\bbeta_{(i)} + \beps$, where $\bX_{(i)} = [\bx_{(i)}^1, \cdots, \bx_{(i)}^{n_i}] \in \mathbb{R}^{p\times n_i}, \bbeta_{(i)}\in\mathbb{R}^{n_i}, \beps \in  \mathbb{R}^p$ denote the class-specific data matrix of the $i$th class, coefficient, and random error, respectively. The class specific regression coefficient can be obtained through the ordinary least square method $\hat{\bbeta}_{(i)} = \argmin_{\bbeta_{(i)}} \Vert \bx-\bX_{(i)}\bbeta_{(i)} \Vert_2^2$. After projecting a new sample data $\bx$ onto the subspaces of each different classes, the label of the new data can be estimated by minimizing distances between the given data point and the projected points, $\hat{\by} = \argmin_{i=\{1, \cdots, C\}} \Vert \bx - \hat{\bx}_{(i)} \Vert_2^2,$
where $\hat{\bx}_{(i)} = \bX_{(i)}\hat{\bbeta}_{(i)}$ denotes the projected point to the $i$th subspace. A schematic representation of the method is presented in \autoref{fig:LRC}.  After the remarkable success of LRC, to date, a variety of LRC based methods have been developed and achieved improved performance \citep{NaRoBe2012, HuYa2012}. Although such methods provide insight on how to apply a linear regression technique to a classification problem, LRC based methods has some drawbacks when attempting to perform shape classification on Kendall's planar shape space. Firstly, due to the linearity assumption intrinsically imposed on the model, LRC lacks the capability of capturing a nonlinear structure of samples. Even this linearity assumption would not be appropriate for shape classification, since the shape space has a nonlinear manifold structure. What is more, the ordinary least square (OLS) employed in estimating $\bbeta_{(i)}$ requires LRC to have $p \geq n_i$ in order that the system of equation $\bx = \bX_{(i)}\bbeta_{(i)}$ is well conditioned. To resolve these drawbacks, \citet{HeDi:2014} proposed kernel ridge regression classifier (KRRC) by incorporating the kernel method and ridge regression into LRC. Motivated by the success of KRRC on Euclidean space, in this paper, we propose a new shape classifier on $\Sigma_2^k$, which we refer to as extrinsic KRRC.

\begin{figure}
\begin{center}
\begin{tikzpicture}
\draw (1,-0.5) node {1 st class} ;
\draw (2.5,1) node {$\cdots$} ;
\filldraw[fill=gray!20,draw=gray!70,opacity=0.6] (0,1) -- (2,0) -- (2,2) -- (0,3) -- cycle;
\draw (1,1.4) node {$\hat{\bx}_{(1)}$} ;

\filldraw[fill=gray!20,draw=gray!70,opacity=0.6]  (3,0) -- (5,0) -- (5,2) -- (3,2) -- cycle;
\draw (4,-0.5) node {$i$ th class} ;
\draw (4,0.8) node {$\hat{\bx}_{(i)}$} ;

\draw (5.5,1) node {$\cdots$} ;

\filldraw[fill=gray!20,draw=gray!70,opacity=0.6](6,0) -- (8,1) -- (8,3) -- (6,2) -- cycle;
\draw (7,-0.5) node {$C$ th class} ;
\draw (7,1.4) node {$\hat{\bx}_{(C)}$} ;

\draw (4,4) node {$\bx$};

\draw [solid] (3.7,3.7) -- (1.8,2.222222) ;
\draw [->,>=stealth,dashed] (1.8,2.222222) -- (1,1.6);
\draw [solid] (4,3.7) -- (4,2.15);
\draw [->,>=stealth,dashed] (4,2.15) -- (4,1.2);
\draw [->,>=stealth,dashed] (6.2,2.222222) -- (7,1.6);
\draw [solid] (4.3,3.7) -- (6.2,2.222222);
\end{tikzpicture}
\end{center}
\caption{Graphical interpretation of regression classifiers : $\bx$ is the new data point, and gray shaded regions display the subspaces of the corresponding classes, $\hat{\bx}_{(i)}$} is the projected value onto the $i$th subspace.\label{fig:LRC}
\end{figure}
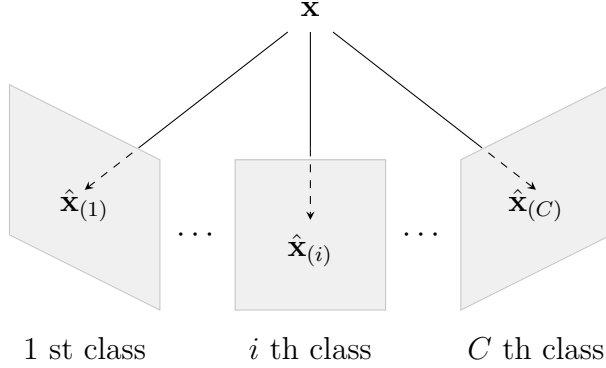

\subsection{Planar Shape Space and Extrinsic Analysis}\label{sec:planarshape}
The aim of this subsection is to provide the background information on Kendall's planar shape space of $k$-ads \citep{Kendall:1984}, also known as similarity shape space. $k$-ads is a landmark configurations that describe a shape of object. Now we begin by giving a brief introduction of the geometry of Kendall's planar shape space.

A similarity shape is defined as a geometrical object that is invariant under the Euclidean similarity transformations (i.e., translation, scale and rotational effects). Note that a general similarity shape space is denoted by $\Sigma_m^k$, where $k, m$ denote the number of landmarks, and the Euclidean dimension on which landmarks lie, respectively. It is also worth noting that unlike Kendall's planar shape space ($m=2$), which is a compact differentiable manifold, a similarity shape space $\Sigma_m^k$ with $m>2$ fails to have a manifold structure \citep{BhBh:2012}. The Kendall's planar shape space $\Sigma_2^k$ is identified in the following manner. First, landmarks are placed on a complex plane as a set of $k$ complex numbers $\bz = (z_1, \cdots, z_k)$, where $z_j = x_j + i y_j$.
Then the effect of translation and scaling can be filtered out by centralizing the configuration and dividing it by its size in order to have a unit norm
\begin{align*}
 \bu = \frac {\bz- \langle \bz \rangle}{\Vert \bz-\langle \bz\rangle \Vert} \in \mathbb{C}S^{k-1} ,
\end{align*}
where $\langle \bz \rangle=(\frac{1}{k}\sum_{i=1}^k z_i, \cdots, \frac{1}{k}\sum_{i=1}^k z_i)$, and $\bu$ is referred to as the preshape of $\bz$. The above procedure maps the given configuration to the point in the pre-shape space, which is equivalently regarded as the space of complex hypersphere $\mathbb{C}S^{k-1} = \left\{ \bu \in \mathbb{C}^k \vert \sum_{i=1}^k u_j = 0, \Vert \bu \Vert = 1 \right\}$. Since, the shape is obtained by eliminating rotational effect from the preshape, the shape of $\bz$ which is denoted as $[\bz]$, is eventually identified as a collection of rotated versions of $\bu$ by angle $\theta$ ; $[\bz] = \left\{ e^{i\theta}\bu : 0 \leq \theta < 2\pi \right\}$.
Although this fully generalizes the similarity shape on a plane, it is useful to note that another important characterization of $\Sigma_2^k$ can be obtained by rewriting the above in the following compact form : $\{ \lambda (\bz-\langle \bz \rangle) : \lambda \in \mathbb{C}\setminus\{0\} \}$, where $\lambda = r e^{i\theta}, r > 0$ and $0 \leq \theta <2\pi$. In this expression, we simultaneously consider the scaling by a scalar $r$ and rotation by an angle $\theta$ via multiplying the centralized $k$-ad configuration $\bz-\langle \bz \rangle$ by the complex number $\lambda$. Thus, the planar shape space is identified as the complex projective space $\mathbb{C}P^{k-2}$, the space of all complex lines through the origin in $\mathbb{C}^{k-1}$.  
In order to provide a more clear insight into the similarity shape, a graphical illustration of the two different shapes is given in \Cref{fig:similarity}. Parts of the data used in this illustration were collected by \citet{StGo2002}. For a more detailed explanation about Kendall's planar shape space, see \citet{DrMa:2016}, and \citet{PaEl:2015}.
\begin{figure}
\centering
    \includegraphics[width=0.8\linewidth]{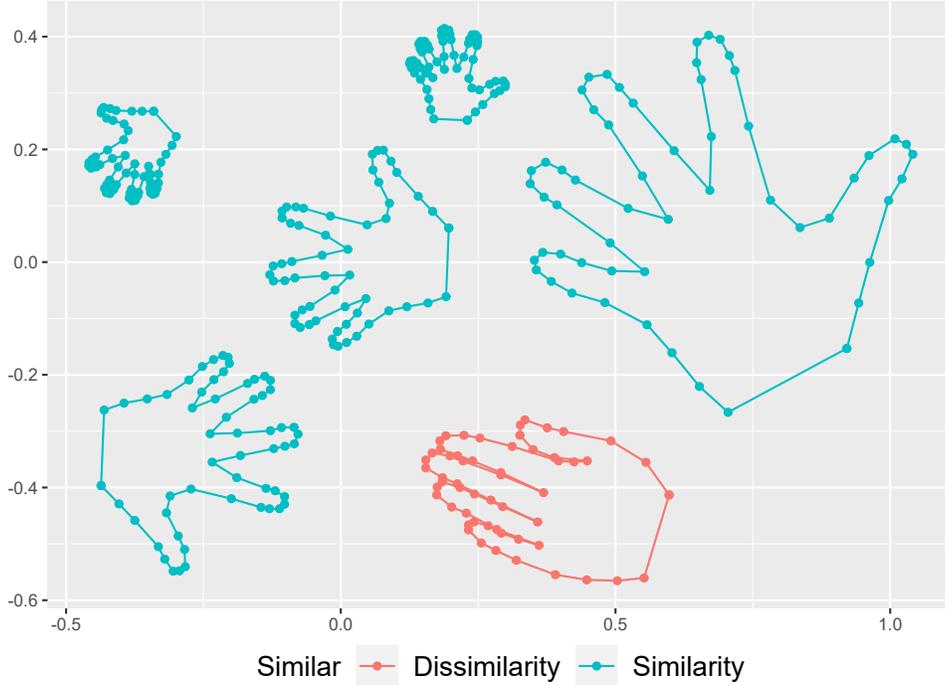}
    \caption{The five blue hands have a same shape under different similarity transformations. That means they would be located at exactly the same point on $\Sigma_2^k$. However, the shape of the red hand with contracted fingers does not match any of the blue hands. It indicates that there exists no similarity transformation such that maps the shape of the red hand into the shapes of blue hands.}
    \label{fig:similarity}
\end{figure}

To better understand our proposed kernel on $\Sigma_2^k$, the rest of this section describes the necessary background on manifold valued data analysis. In the literature of manifold-valued data analysis, two types of approaches caused by using different types of distances have been considered. First, a natural choice of distances is the Riemannian metric which leads to the intrinsic data analysis on manifolds. Despite their popularity in manifold valued data analysis, however, the intrinsic Riemannian distance has not been widely utilized in kernel methods due to difficulty in constructing a positive definite kernel. For example the Gaussian kernel equipped with the known intrinsic Riemannian distance between two shape $[\bz_i], [\bz_j] \in \Sigma_2^k$ 
\begin{align}
    \kappa([\bz_i],[\bz_j]) := \exp \left( -\frac{\rho_{R}^2([\bz_i],[
\bz_j])}{\sigma^2} \right),\label{eq:Int_Gaussian}
\end{align} where $\rho_{R} = \arccos(\vert \langle \bu_i, \bu_j \rangle \vert)$ is the Riemannian distance defined on $\Sigma_2^k$ and $\bu_i, \bu_j$ are pre-shapes of $[\bz_i], [\bz_j]$, respectively, doesn't lead to a positive definite kernel. Motivated by this problem, we instead consider a kernel with another distance on $\mathcal{M}$, called the extrinsic distance. The extrinsic distance which leads to the extrinsic data analysis on $\mathcal{M}$, is a chord distance in a higher-dimensional Euclidean space induced by an \emph{embedding} $J : \mathcal{M} \rightarrow \mathbb{R}^D$. The main benefit of using the extrinsic distance lies in the ability to construct positive definiteness kernels on general manifolds without suffering from the non-positive definiteness problem aroused by intrinsic distance. This desirable property will be demonstrated in \cref{sec:extrinsic}. We now finish this section by reviewing the extrinsic data analysis on manifolds.  

 The basic idea of extrinsic analysis is supported by the fact that a manifold can be embedded in a higher-dimensional Euclidean space (\citet{Wh:1944}) via an embedding $J$. Let us recall the following definition of embedding $J$ of a manifold $\mathcal{M}$ into an Euclidean space $\mathbb{R}^D$. In differential topology, an embedding is defined as a particular type of immersion. An immersion of a manifold $\mathcal{M}$ into an Euclidan space $\mathbb{R}^D$ is a differentiable map $J : \mathcal{M} \rightarrow \mathbb{R}^D$, for which their differential $d_pJ : T_p\mathcal{M} \rightarrow T_{J(p)}\mathbb{R}^D$ is a one-to-one, where $T_p\mathcal{M}$,  $T_{J(p)}\mathbb{R}^D$ denote the tangent space of $\mathcal{M}$ at $p \in \mathcal{M}$ and the tangent space of $\mathbb{R}^D$ at $J(p)$, respectively. Then a one-to-one immersion $J : \mathcal{M} \rightarrow \mathbb{R}^D$ is called embedding if $J$ is a homeomorphism from $\mathcal{M}$ to $J(\mathcal{M})$ with the induced topology. It should be noted that unfortunately the embedding $J$ is not uniquely determined. Regarding the non-uniqueness issue of embedding, the question that immediately arises is which embedding should be used ? Among many possible choices of embeddings, in the literature of the extrinsic analysis, the equivariant embedding is generally preferred due to its capability of conserving a geometry of the given manifold. The following is the definition of the equivariant embedding. For a Lie group $G$ acting on $\mathcal{M}$, the embedding $J : \mathcal{M} \rightarrow \mathbb{R}^D $ is said to be the $G$ equivariant embedding if there exists the group homomorphism
$\phi : G \rightarrow \operatorname{GL}_D(\mathbb{R} \ \text{or} \ \mathbb{C})$ satisfying $J(gp) = \phi(g) J(p), \forall p \in \mathcal{M}, g \in G$, where $\operatorname{GL}_D(\mathbb{R} \ \text{or} \ \mathbb{C})$ denotes the general linear group which is the group of $D\times D$ invertible real, or complex matrices, respectively. This definition indicates that the group action on the embedded space $J(\mathcal{M})$ is recovered by the corresponding group action of $G$ on $\mathcal{M}$ through $\phi$. So in this sense, the important geometrical property of the manifold is preserved by the equivariant embedding. For the planar shape space, the Veronese Whitney (VW) embedding \citep{Ke:1992, BhBh:2012},
\begin{align}
	J : & \Sigma_2^k \rightarrow S(k,\mathbb{C}) , \ \text{such that} \ [\bz] \mapsto \bu \bu^\ast ,
	\label{eq:VW_embedding}
\end{align}
which maps from $\Sigma_2^k$ to $S(k,\mathbb{C})$, the space of $k \times k$ self-adjoint (or Hermitian) matrices, is typically used. It can be easily shown that the Veronese Whitney embedding is the $\operatorname{SU}(k)$ equivariant embedding, where $\operatorname{SU}(k) = \left\{ A \in \operatorname{GL}(k,\mathbb{C}), AA^\ast = I, \det(A) = 1 \right\}$ is the special unitary group that acts on $\Sigma^k_2$. Since $A [\bz] = [A\bz]$ satisfies for $A \in \operatorname{SU}(k), [\bz] \in \Sigma_2^k$, we have
\begin{align*}
	J (A [\bz]) &= A \bu \bu^\ast A^\ast\\
	&= \phi(A) \bu \bu^\ast.
\end{align*}
The last line above is obtained by taking $\phi : \operatorname{SU}(k) \rightarrow \operatorname{GL}_k(\mathbb{C})$ such that $\phi(A)B = ABA^\ast$, where $B \in S(k,\mathbb{C})$. Thus by using the definition of Veronese Whitney embedding \eqref{eq:VW_embedding}, we have shown that $J(A[\bz]) = \phi(A) J([\bz])$, which completes the proof that the Veronese Whitney embedding is the $\operatorname{SU}(k)$ equivariant.

\section{Kernel Ridge Regression Classifier on $\Sigma_2^k$}\label{sec:KRRC}

\subsection{Naive RRC}\label{sec:NaiveRRC}

Before introducing our method, we here describe a naive way of applying existing regression classifier to the shape space. Suppose we observe $\{ [\bz_i], \by_i \}_{i=1}^n $, where $[\bz_i] \in \Sigma_2^k$, and $\by_i \in \mathcal{Y} = \{ 1,\cdots, C\}$. If shapes are uncautiously treated as $k$ dimensional complex random vectors, then preshape data matrix of the $i$ th class $\bU_{(i)} = \left[\bu_{(i)}^1, \bu_{(i)}^2,  \cdots , \bu_{(i)}^{n_i} \right] \in \mathbb{C}^{k \times n_i}$ is obtained by stacking each preshape column-wise. Then under the linear subspace assumption, the naive ridge regression classifier for the shape manifold can be formulated based on the following class specific regression model
\begin{align}
	\bu = \bU_{(i)} \bbeta_{(i)} + \beps \label{eq:linear_subspace_reg},
\end{align}
where $\bbeta_{(i)} \in \mathbb{C}^{n_i}$, $\bu$ and $\beps$ are the regression coefficient, the new preshape, and a $\mathbb{C}^k$ valued error, respectively. Here instead of using the usual OLS estimation that minimizes the sum of squared residuals, we adopt the ridge regression approach which minimizes the following complex valued regularized least squares problem
\begin{align}
	\hat{\bbeta}_{(i)} &= \argmin_{\bbeta_{(i)}} \Vert \bu - \bU_{(i)}\bbeta_{(i)} \Vert_2^2  + \lambda \Vert \bbeta_{(i)} \Vert_2^2, \label{eq:complex_reg}
\end{align}
where $\lambda > 0$ is a regularization parameter. The least squares problem for real-valued variables can be straightforwardly generalized to complex-valued variables by replacing matrix transposes with conjugate transposes. Thus the ridge regression coefficient $\hat{\bbeta}_{(i)} = \left(\bU_{(i)}^\ast \bU_{(i)} + \lambda I\right)^{-1} \bU_{(i)}^\ast \bu$, where $\bU_{(i)}^\ast$ is the conjugate transpose of $\bU_{(i)}$, and the projected value onto the $i$ th subspace $\hat{\bu}_{(i)} = \bU_{(i)} \hat{\bbeta}_{(i)} = \bU_{(i)} \left(\bU_{(i)}^\ast \bU_{(i)} + \lambda I\right)^{-1} \bU_{(i)}^\ast \bu$ are obtained. One advantage to be gained by adopting ridge estimator is that it does not necessarily require $k > n_i$ which is a common assumption imposed in LRC. Moreover, it resolves the multicollinearity issue among training samples. The label of the given shape is finally predict by minimizing the distance between the projected complex vector onto each subspace and $\bu$ 
\begin{align}
	\hat{\by}_\bu &= \argmin_{i=\{1,\cdots, C\}} \Vert \hat{\bu}_{(i)} - \bu \Vert_2^2 \notag\\
	&= \argmin_{i} \Vert \bU_{(i)} \left(\bU_{(i)}^\ast \bU_{(i)} + \lambda I\right)^{-1} \bU_{(i)}^\ast \bu - \bu\Vert_2^2 \label{eq:naive_classification} .
\end{align}
While on the surface it seems a reasonable way forward, in fact, the approach described above has the following conceptual limitations. First, note that because the planar shape space is a nonlinear manifold, the linear subspace assumption in \eqref{eq:linear_subspace_reg} is not appropriate. Second, the geometry of the shape space is not taken into account by carelessly employing Euclidean norm for a preshape in the first part of \eqref{eq:complex_reg}, and \eqref{eq:naive_classification}. As will be illustrated in \cref{sec:simulation}, that might be a main cause yielding a poor estimation performance of the naive RRC. This indicates that due to the nonlinear manifold structure of the planar Kendall shape space, it is not straightforward to apply RRC to shape classification.

\subsection{KRRC on $\Sigma_2^k$}
The aforementioned problems encountered when directly applying the RRC to $\Sigma_2^k$ is mostly caused by the lack of the nonlinearity in the model. This can be efficiently resolved by exploiting the RKHS method. Suppose a nonlinear function $\Phi : \mathcal{X} \rightarrow \mathcal{H}$ that maps the data values lying on the original input space into a high dimensional feature space which, in fact, is a reproducing kernel Hilbert space (RKHS) of functions. Under the linear subspace assumption in a feature space $\mathcal{H}$, the class specific ridge regression \eqref{eq:complex_reg} can be extended to $\mathcal{H}$ in the following manner
\begin{align}
	\hat{\bbeta}_{\Phi, (i)} = \argmin_{\bbeta_{(i)}} \Vert \Phi(\bu)-\PPhi_{(i)} \bbeta_{(i)} \Vert_2^2 + \lambda \Vert \bbeta_{(i)} \Vert_2^2 \label{eq:ker_ridge}.
\end{align}
We denote by $\PPhi_{(i)}$ an $(\text{dim}(\mathcal{H}) \times n_i)$ matrix of regressors whose $j$ th column is the feature vector $\Phi(\bu_{(i)}^j)$ of the $j$ th element in the $i$ th class.
The solution of \eqref{eq:ker_ridge} is then given by, $\hat{\bbeta}_{\Phi, (i)} = \left(\PPhi_{(i)}^\top \PPhi_{(i)}+ \lambda I \right)^{-1} \PPhi_{(i)}^\top \Phi(\bu)$. From the RKHS theory, the feature space $\mathcal{H}$ can be implicitly determined by the kernel function $\kappa(\cdot,\cdot) : \mathcal{X} \times \mathcal{X} \rightarrow \mathbb{R}$. Precisely, the kernel function $\kappa$ plays a role of the inner product in the feature space by satisfying $\kappa(\bx,\by) = \langle \Phi(\bx), \Phi(\by) \rangle$, where $\bx,\by \in \mathcal{X}$. Then we see that $\PPhi_{(i)}^\top \PPhi_{(i)}$ can be represented as the $(n_i \times n_i)$ Gram matrix $\bK_{(i)}$ whose $(l, j)$th element $\bK_{(i)}^{(l,j)}$ is $ \langle\Phi(\bu_{(i)}^l), \Phi(\bu_{(i)}^j)\rangle =\kappa(\bu_{(i)}^l,\bu_{(i)}^j)$. Thus the solution of \eqref{eq:ker_ridge} can be rewritten as $\hat{\bbeta}_{\Phi, (i)} = (\bK_{(i)} + \lambda I)^{-1} \bk_{(i)}$, where $\bk_{(i)} = (\kappa(\bu_{(i)}^1, \bu), \cdots, \kappa(\bu_{(i)}^{n_i}, \bu))^\top$ is the vector of inner products in the feature space between the given training data of the $i$ th group and the new point $\bu$. The projected value of $\bu$ onto the feature subspace of the $i$ th class is given by 
\begin{align}
	\hat{\Phi}_{(i)}(\bu) &= \PPhi_{(i)}\hat{\bbeta}_{\Phi, (i)} = \PPhi_{(i)} (\bK_{(i)} + \lambda I)^{-1} \bk_{(i)} \label{eq:Proj_KRRC} .
\end{align}
As can be observed from the above equation, the projection depends on the feature matrix. Since the dimension of the feature space is usually very high or even infinite, it is difficult to directly evaluate $\hat{\Phi}_{(i)}(\bu)$. By using some algebra, however, it can be shown that the final prediction phase given in the following
\allowdisplaybreaks[1]
\begin{align}
	\hat{\by}_\bu &= \argmin_{i=\{1,\cdots,C\}} \Vert \hat{\Phi}_{(i)}(\bu) - \Phi(\bu) \Vert_2^2 \notag\\
	&= \argmin_{i=\{1,\cdots,C\}} \hat{\Phi}_{(i)}(\bu)^\top \hat{\Phi}_{(i)}(\bu) - 2 \hat{\Phi}_{(i)}(\bu)^\top \Phi(\bu) \notag\\
	&=  \argmin_{i=\{1,\cdots,C\}} \bk_{(i)}(\bu)^\top \left(\bK_{(i)} + \lambda I\right)^{-1} \bK_{(i)} \left(\bK_{(i)}+\lambda I\right)^{-1}\bk_{(i)}(\bu)	-2 \bk_{(i)}(\bu)^\top \left(\bK_{(i)} + \lambda I \right)^{-1} \bk_{(i)}(\bu) \notag\\
	&=  \argmin_{i=\{1,\cdots,C\}} \bk_{(i)}(\bu)^\top  \left(\bK_{(i)} + \lambda I\right)^{-1} \left(-\bK_{(i)} - 2 \lambda I \right) \left(\bK_{(i)} + \lambda I\right)^{-1} \bk_{(i)}(\bu),
	\label{eq:kernel_trick}
\end{align}
is no longer depending on the feature map $\Phi(\cdot)$.
Thus, without evaluating the unknown feature map, the kernel trick enables KRRC to efficiently classify shapes via kernel function defined on the input space.


\section{Extrinsic Kernel on $\Sigma_2^k$}\label{sec:extrinsic}

Regarding the selection of the kernel, the most popular Euclidean kernel used in various kernelized methods is the Gaussian radial basis function (RBF) kernel, which maps given data into the infinite dimensional feature space. Recall the Euclidean Gaussian RBF kernel for two given points $\bx_i, \bx_j \in \mathbb{R}^d$ 
\begin{align}
	\kappa(\bx_i,\bx_j) := \exp \left( - \frac{\Vert \bx_i - \bx_j \Vert^2}{\sigma^2}\right). \label{eq:Euclidean_GRBF}
\end{align}
To perform kernelized methods on $\Sigma_2^k$, one might consider the Gaussian RBF kernel by substituting the Euclidean distance in \eqref{eq:Euclidean_GRBF} with any particular shape distance chosen by a researcher's personal preference and specific needs. Then the Gaussian like kernel on the planar shape space takes the form of $\kappa([\bz_i],[\bz_j]) := \exp \left( -d_{\Sigma_2^k}^2([\bz_i],[
\bz_j])/\sigma^2 \right)$, where $d_{\Sigma_2^k}(\cdot,\cdot)$ denotes a generic distance function on $\Sigma_2^k$. Some examples of $d_{\Sigma_2^k}$ include the arc length (Riemannian distance), the partial Procrustes distance, and the full Procrustes distance. The kernel obtained as described above might seem to be tempting to generalize the Gaussian RBF kernel on the shape space, as no additional effort is required to conduct kernelized methods on that space. But unfortunately, not all choices of distances lead to a positive definite kernel, which is an essential requirement of RKHS methods to construct the valid feature space. For example, as we have seen in \Cref{sec:planarshape} the Gaussian kernel equipped with intrinsic Riemannian distance \eqref{eq:Int_Gaussian} fails to be a positive definite kernel on $\Sigma_2^k$. To the best our knowledge, the positive definite Gaussian like kernel on $\Sigma_2^k$ was firstly proposed by \cite{JaSaLiHa:2013} using the full Procrustes distance $d_{\text{FP}}([\bz_i],[\bz_j]) = \left(1- \vert \langle \bu_i, \bu_j \rangle \vert^2\right)^{1/2}$. The kernel referred to as the full Procrustes Gaussian (FPG) kernel is given by
\begin{align}
	\kappa([\bz_i],[\bz_j]) := \exp \left( - \frac{ 1- \vert \langle \bu_i, \bu_j \rangle \vert^2 }{\sigma^2}\right), \label{eq:full_GRBF_Ker}
\end{align}
where $\bu_i, \bu_j \in \mathbb{C}S^{k-1}$ are given pre-shape of $[\bz_i]$, and $[\bz_j]$, respectively. Various kernelized methods for the planar Kendall shape space have been successfully implemented using the FPG kernel. Thus the FPG kernel could be considered as a potential candidate kernel that can be directly exploited for KRRC on $\Sigma_2^k$.

As we have noted before, the kernel methods have been applied to various types of data, since no assumption is required on a domain of a data space, other than it being nonempty set. But the above metric based approach for derivation of positive definite kernel is not capable of providing a general way for utilizing kernel methods on non Euclidean space. Suppose we perform kernelized methods on an arbitrary manifold. In such situation, the main drawback of the above substituting distance method is that for each space of interest it requires to find a specific distance that ensures a positive definiteness kernel. However, it is usually difficult to check the Gaussian kernel equipped with a chosen distance leads to a positive definite kernel. For example, \citet{FeHa:2016} indicated that positive definiteness of the Gaussian kernel is violated unless the input metric space $(\mathcal{X},d)$ is flat in the sense of Alexandrov \citep{BrHa:1999}. Thus in practice, this approach can not be directly adopted for implementing the kernel methods on arbitrary manifolds. So in what follows, instead of directly using distances defined on the Kendall's planar shape manifold, we present a facile and universal approach that always guarantees positive definiteness on manifolds.

\subsection{Extrinsic Veronese Whitney Gaussian kernel}
In this section, we propose the Gaussian RBF kernel on $\Sigma_2^k$ which makes use of the induced Euclidean distance between two shapes $[\bz_i]$, and $[\bz_j]$ via VW embedding \eqref{eq:VW_embedding}. The proposed kernel which we call extrinsic Veronese Whitney Gaussian (VWG) kernel is given by,  
\begin{align}
	\kappa([\bz_i],[\bz_j]) = \exp \left( -\frac{\rho_E^2 \left([\bz_i],[\bz_j]\right) }{\sigma^2} \right), 
	\label{eq:VW_Gaussian}
\end{align}
where $\rho_E^2 ([\bz_i],[\bz_j]) = \tr \left[(J([\bz_i])-J([\bz_j]))^\ast(J([\bz_i])-J([\bz_j]))\right]$. Since the extrinsic distance can be rewritten as $\sum_{l,l^\prime} \Vert (\bu_i\bu_i^\ast)_{l,l^\prime} - (\bu_j \bu_j^\ast)_{l,l^\prime} \Vert^2$, it is naturally the squared Euclidean distance between two $k \times k$ complex Hermitian matrices $J([\bz_i])$ and $J([\bz_j])$ regarded as elements of $\mathbb{C}^{k^2}$. Moreover the squared extrinsic distance via VW embedding is associated with the full Procrustes distance in the following manner;
\begin{align*}
\rho_E^2 ([\bz_i],[\bz_j]) &= \operatorname{Trace}(\bu_i \bu_i^\ast - \bu_j\bu_j^\ast)^2\\
&= \sum_{l=1}^k \vert \bu_i^l \vert^2 + \sum_{l=1}^k \vert \bu_j^l \vert^2 - \sum_{l=1}^k\sum_{l^\prime=1}^k \left(\bu_i^l \bar{\bu}_i^{l^\prime}\bu_j^{l^\prime} \bar{\bu}_j^l +  \bu_j^l \bar{\bu}_j^{l^\prime}\bu_i^{l^\prime} \bar{\bu}_i^l\right)\\
&= 2 - 2 \vert \bu_i^\ast \bu_j \vert^2\\
&= 2d_{FP}^2 ([\bz_i],[\bz_j]),
\end{align*}
where the superscript $l$ denotes the $l$ th element of the preshape, and the bar above the preshape denotes the complex conjugate. And one can show both the full and partial Procrustes distances ($ d_{P}^2 ([\bz_i],[\bz_j]) = (1-\vert \langle \bu_i,\bu_j \rangle \vert) $) are extrinsic type distances because they are distances based on some embedded spaces. 

Though the full Procrustes Gaussian kernel and the extrinsic Veronese Whitney Gaussian kernel corresponds to each other in the planar shape space, it is worth emphasizing that the our extrinsic approach takes one promising benefit to construct kernel on general manifolds. In terms of constructing a kernel on manifold, the advantage to be obtained by exploiting extrinsic framework rather than substituting distance method is that it resolves non-positive definiteness issue which is potentially inherent in the intrinsic Riemannian distance. More precisely, once a proper embedding $J : \mathcal{M} \rightarrow \mathbb{R}^D$ exists, an extrinsic distance induced by $J$ guarantees the positive definiteness of Gaussian kernel. 

The rest of this section is devoted to prove that the Veronese Whitney Gaussian kernel is positive definite. To this end, we need the following technical result in \citet{BeChRe:1984}. A kernel having the form of $\exp(-t f(\bx_i, \bx_j))$ is positive definite for all $t > 0$, if and only if $f$ is negative definite function \citep[for proof, see pp.74-75 in][Theorem 2.2]{BeChRe:1984}. The above result mainly originated from \citet{Sc:1938}. We recall that for any nonempty set $\mathcal{X}$, a function $f :(\mathcal{X} \times \mathcal{X} ) \rightarrow \mathbb{R}$ is negative definite if and only if $f$ is symmetric, and $\sum_{i,j=1}^n \alpha_i \alpha_j f(\bx_i,\bx_j) \leq 0$, for all $n \in \mathbb{N}, \{ \bx_1, \cdots, \bx_n \} \subseteq \mathcal{X}$, and $\{\alpha_1,\cdots, \alpha_n\} \subseteq \mathbb{R}$ with $\sum_{i=1}^n \alpha_i = 0 $. Therefore, using the above fact, it suffices to verify that the squared extrinsic distance $\rho_E^2$ is negative definite, which is stated in the following theorem. 

\begin{thm} \label{thm:vw_neg}
	The squared extrinsic Euclidean distance function $\rho_E^2 : (\Sigma_2^k \times \Sigma_2^k) \rightarrow \mathbb{R} $ induced by the Veronese Whitney embedding $J : \Sigma_2^k \rightarrow S(k,\mathbb{C})$ 
	\begin{align*}
		\rho_E^2 ([\bz_i],[\bz_j]) &:= \Vert J[\bz_i]-J[\bz_j] \Vert_F^2\\
		&= \tr \displaystyle\left\{ (J[\bz_i]-J[\bz_j] )^\ast((J[\bz_i]-J[\bz_j])\right\}, 
	\end{align*}
where $[\bz_i],[\bz_j] \in \Sigma_2^k$, is negative definite.
\end{thm}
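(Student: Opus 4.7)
The plan is to reduce the statement to the classical fact that the squared Euclidean distance on any real inner product space is a negative definite kernel, and then simply invoke this for the ambient space of the Veronese Whitney embedding. First I would note that $S(k,\mathbb{C})$, viewed with the Frobenius inner product $\langle A, B\rangle_F = \tr(A^\ast B)$, is a real Hilbert space (of real dimension $k^2$), and that by definition $\rho_E^2([\bz_i],[\bz_j]) = \| J([\bz_i]) - J([\bz_j]) \|_F^2$ is precisely the squared distance between the images $J([\bz_i]), J([\bz_j])$ in this real Hilbert space. Symmetry is immediate from this expression.

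Next, to check the negative definite inequality, fix an integer $n$, points $[\bz_1],\ldots,[\bz_n]\in\Sigma_2^k$, and real scalars $\alpha_1,\ldots,\alpha_n$ with $\sum_{i=1}^n\alpha_i=0$. Writing $A_i := J([\bz_i])$, I would expand
\begin{align*}
\sum_{i,j=1}^n \alpha_i\alpha_j\,\rho_E^2([\bz_i],[\bz_j])
&= \sum_{i,j} \alpha_i\alpha_j\bigl(\|A_i\|_F^2 + \|A_j\|_F^2 - 2\langle A_i, A_j\rangle_F\bigr).
\end{align*}
The first two sums factor as $\bigl(\sum_j\alpha_j\bigr)\bigl(\sum_i \alpha_i\|A_i\|_F^2\bigr)$ and the symmetric counterpart, both of which vanish by the constraint $\sum_i\alpha_i=0$. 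What remains is
\begin{align*}
-2\sum_{i,j}\alpha_i\alpha_j\langle A_i,A_j\rangle_F = -2\,\Bigl\|\sum_{i=1}^n \alpha_i A_i\Bigr\|_F^2 \ \leq\ 0,
\end{align*}
establishing negative definiteness of $\rho_E^2$.

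I do not expect any serious obstacle in this argument; the entire content is that an embedding into a Hilbert space pulls back the negative definite squared-distance kernel on the target to a negative definite kernel on the source, and the only thing specific to the VW setup is identifying the ambient space $S(k,\mathbb{C})$ with Frobenius inner product as a real Hilbert space. The closest thing to a subtlety worth flagging explicitly is that $S(k,\mathbb{C})$ is treated as a real (not complex) inner product space so that the scalars $\alpha_i$ in the definition of negative definiteness (which are required to be real) combine correctly with $\langle\cdot,\cdot\rangle_F$; this is automatic since the Frobenius inner product of two Hermitian matrices is already real valued. Once the theorem is proved, combined with the Schoenberg/Berg--Christensen--Ressel result quoted just before the statement, positive definiteness of the extrinsic Veronese Whitney Gaussian kernel in \eqref{eq:VW_Gaussian} follows immediately for every bandwidth $\sigma>0$.
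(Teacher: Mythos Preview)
Your proposal is correct and follows essentially the same route as the paper: expand $\|J([\bz_i])-J([\bz_j])\|_F^2$ via the Frobenius inner product, use bilinearity together with $\sum_i\alpha_i=0$ to kill the two ``diagonal'' sums, and identify the remaining cross term as $-2\|\sum_i\alpha_i J([\bz_i])\|_F^2\le 0$. The only addition is your explicit remark that $S(k,\mathbb{C})$ with the Frobenius inner product is a \emph{real} inner product space, which is a welcome clarification but not a different argument.
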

\begin{proof} The following proof is immediately from Theorem 4.4 in \citet{JaHaSaLiHa:2013},
\begin{align*}
	\sum_{i,j=1}^n \alpha_i \alpha_j \rho_E^2 (J([\bz_i]), J([\bz_j])) &=\sum_{i,j=1}^n \alpha_i \alpha_j \Vert J([\bz_i]) - J([\bz_j]) \Vert_F^2\\
	&=\sum_{i,j=1}^n \alpha_i \alpha_j \langle J([\bz_i]) - J([\bz_j]), J([\bz_i]) - J([\bz_j]) \rangle_F\\
	&=\sum_{j=1}^n \alpha_j \sum_{i=1}^n \alpha_i \langle J([\bz_i]), J([\bz_i]) \rangle_F -2 \sum_{i,j=1}^n \alpha_i \alpha_j \langle J([\bz_i]), J([\bz_j]) \rangle_F \\
	& \hspace{.5cm}+ \sum_{i=1}^n \alpha_i \sum_{j=1}^n \alpha_i \langle J([\bz_j]), J([\bz_j]) \rangle_F\\
	&= -2\sum_{i,j=1}^n\alpha_i \alpha_j \langle J([\bz_i]), J([\bz_j]) \rangle_F = -2 \Vert \sum_{i=1}^n \alpha_i J([\bz_i]) \Vert_F^2\\
	&\leq 0.
\end{align*}
\end{proof}
We have proved that the VW Gaussian kernel \eqref{eq:VW_Gaussian} is a positive definite kernel. Though \Cref{thm:vw_neg} only focused on the planar shape space, the result holds for general manifolds, because no specific assumption has been required about $\mathcal{M}$ and $J$, other than injective immersion which satisfying homeomorphism from $\mathcal{M}$ to $J(\mathcal{M})$. We further note that regarding the negative definiteness of the squared extrinsic distance, a metric space $(\mathcal{X}, d)$ is said to be a negative type if $\sum_{i,j\leq n} \alpha_i \alpha_j d(\bx_i,\bx_j) \leq 0$ holds for all $n \leq 1, \bx_1, \cdots, \bx_n \in \mathcal{X}$, and $\alpha_1,\cdots, \alpha_n \in \mathbb{R}$ with $\sum_{i=1}^n \alpha_i = 0 $ \citep{Ly:2013}. Moreover, as revealed by \citet{Sc:1937, Sc:1938}, a negative type metric space is equivalent to the embeddability into Hilbert space in the following manner. A metric space $(\mathcal{X},d)$ is of negative type if and only if there exist a Hilbert space $\mathcal{H}$ and a map $\phi : \mathcal{X} \rightarrow \mathcal{H}$ such that $\forall \bx, \bx^\prime \in \mathcal{X}, d(\bx,\bx^\prime) = \Vert \phi(\bx) - \phi(\bx^\prime) \Vert^2$. Therefore, a general manifold equipped with squared extrinsic distance $(\mathcal{M}, \rho_E^2)$ is shown to be a metric space of negative type, by taking $\phi = J$. This suggests that our extrinsic approach provides a very general framework for deriving positive definite Gaussian kernels on manifolds.


\section{Real Data analysis}\label{sec:simulation}
In this section, we illustrate our proposed method by examining the {\tt PassifloraLeaves} data. The leaves of Passiflora, a botanical genus of more than 550 species of flowering plants, are remarkably different with respect to their species. {\tt PassifloraLeaves} data was collected by \citet{Ch:2016}. They analyzed shapes of  3,319 Passiflora leaves from 40 different species which were assigned into 7 classes according to their appearance. Fifteen landmarks were placed at homologous positions to capture the shapes leaves (see \Cref{fig:leaf_landmarks}). The data are available at \url{https://github.com/DanChitwood/PassifloraLeaves}, and a more detailed description can also be found in \citet{Ch:2017}. In \Cref{fig:leaf_classes}, shape differences of leaves among seven distinct groups are graphically demonstrated along with their sample extrinsic means which are obtained by minimizing the empirical Fr\'echet function. To be more specific, the extrinsic mean in a general manifold is given by 
\begin{align*}
    \hat{\mu}_E = J^{-1} \left( \mathcal{P}\Big(\argmin_{\px \in \mathbb{R}^D} \sum_{i=1}^n\Vert J(\bx_i) - \px \Vert^2\Big) \right), \ \bx_i \in \mathcal{M},
\end{align*}
where $\mathcal{P}$ is the projection onto the image of the embedding $J(\mathcal{M})$. In particular, the extrinsic mean for Kendall's planar shape space, also known as VW mean, is obtained by the unit eigenvector corresponding to largest eigenvalue of the complex matrix $\frac{1}{n}\sum_{i=1}^n \bu_i \bu_i^\ast$. For more detailed explanations of the extrinsic mean on $\Sigma_2^k$, see \citet{BhPa:2003}. 

\begin{figure}
	\centering
	\includegraphics[width=0.95\linewidth]{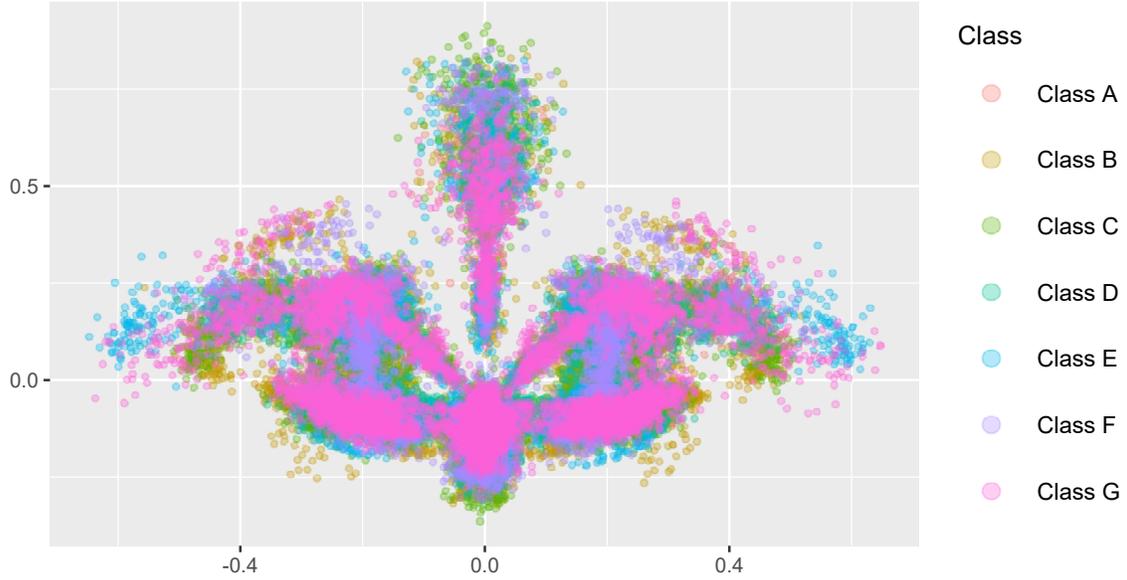} 
	\caption{The visualization of the {\tt PassifloraLeaves} data : 15 landmarks are placed on the leaves of Passiflora. The plot is colored by 7 different classes.}\label{fig:leaf_landmarks}
\end{figure}

\begin{figure}
	\centering
	\includegraphics[width=0.8\linewidth]{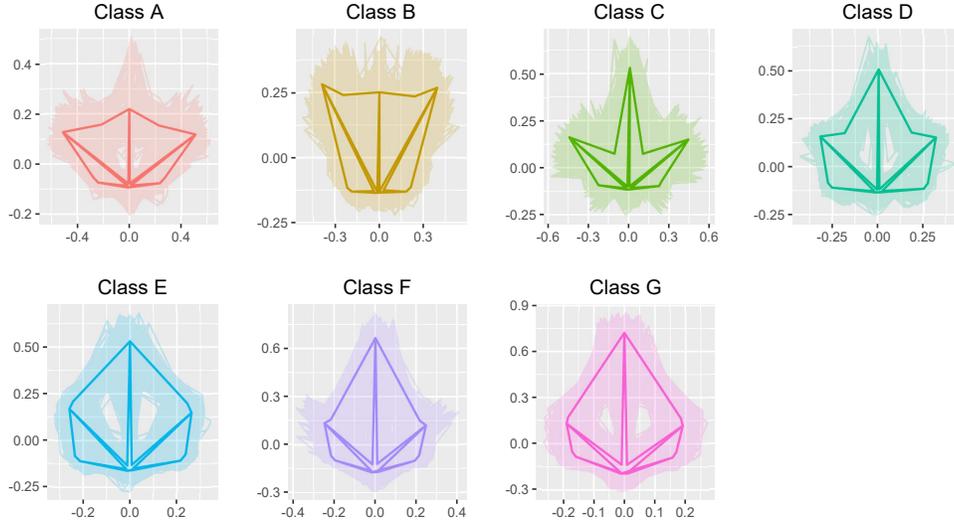}
	\caption{The shapes of leaves with different classes : The set of transparent lines represents observations for each of the classes, and the bold solid lines correpond to their extrinsic means.}\label{fig:leaf_classes}
\end{figure}
In addition to the KRRC with the extrinsic VW Gaussian kernel (VWG), we also consider KRRC with the intrinsic Riemannian Gaussian kernel in \eqref{eq:Int_Gaussian}, the Naive RRC described in \cref{sec:NaiveRRC}, Support Vector Machine (SVM) with Gaussian kernel implemented in the \texttt{\textbf{e1071}} package, Kernel Fisher discriminant analysis (KFA) implemented in the \texttt{\textbf{kfda}} package, and the multiclass GLM with the ridge penalty implemented in the {\fontfamily{lmtt}\selectfont glmnet} package as competing methods. And the software which implements the subspace learning based methods described in this paper, is available online at \url{https://github.com/hwiyoungstat/ShapeKRRC}. Note that performance of the aforementioned models depend significantly on a careful choice of tuning parameters. For examples, KRRC involves two tuning parameters that need to be determined in a data driven manner; (i) the regularization parameter $\lambda$ imposed on the ridge regression term, (ii) the kernel specific parameter $\sigma^2$ in the Gaussian kernel. However, since no relevant guideline concerning how to tune those parameters exists, for each run of the simulation the optimal combination of $(\lambda, \sigma^2)$ is jointly chosen by a two-dimensional grid search. At the beginning of every simulation, the samples in each of the 7 classes were randomly split into 60\% training and the ramaining 40\% test set. In addition, we drew subsamples of equal size from the training sets in order to construct subspaces. The number of subsamples for each class were varied with $n_i = \{10, 20, \cdots, 100\}$. Therefore, all model considered in this simulation were trained based only on a subset of the training set (size is $7 \times n_i$) and the performance was evaluated on the test set which was not used in the training phase. And simulations  were  replicated  20  times  in each of the subspace sizes. For evaluation metrics the macro averaging precision, recall, $F_1$ scores, and average accuracy are considered which are given by $ \text{prec} = \sum_{i=1}^{c} \frac{TP_i}{TP_i + FP_i}/c$, $\text{rec} = \sum_{i=1}^c \frac{TP_i}{TP_i + FN_i}/c$, $F_1 = 2 \cdot \text{prec} \cdot \text{rec} / (\text{prec} + \text{rec})$ , and $\text{Avg. Accuracy} = \sum_{i=1}^c \frac{TP_i + TN_i}{TP_i + FN_i + FP_i + TN_i}/c$, respectively, where $TP_i, FP_i, FN_i$, and $TN_i$ denote the true positive, false positive, false negative, and true negative count for the class $i$, respectively. Note that the larger value of the all evaluation metrics above indicates the better performance of the models.

\begin{table}
\centering
	\begin{tabular}{llrrrrrrrrrrrrrrrrrr}
	\toprule
	Subspace Size& Measure&\multicolumn{6}{c}{Methods} \\
	\cmidrule(lr){1-8} 
	 & & VWG & SVM & KFA &  RIE & RRC & GLM \\
	 \cmidrule(lr){3-8} 
	\multirow{4}{*}{$n_i=10$}& Precision & \bf{0.7450} & 0.6940 & 0.7102 &  0.7098 & 0.7403 & 0.6589 \\
	& Recall & \bf{0.7490} & 0.6843 & 0.7185 &  0.7139 & 0.7282 & 0.6675\\
	& $F_1$ & \bf{0.7389} & 0.6779 & 0.7092 &  0.7006 & 0.7215 & 0.6515\\
	& Accuracy & \bf{0.9297} & 0.9110 & 0.9196 &  0.9195 & 0.9246 & 0.9091\\
	\midrule	 
	\multirow{4}{*}{$n_i=50$}& Precision & \bf{0.8243} & 0.7791 & 0.8001 & 0.7649 & 0.7894 & 0.7545 \\
	& Recall &\bf{0.8366} & 0.7899 & 0.8111 & 0.7703 & 0.7844 & 0.7698\\
	& $F_1$ & \bf{0.8271} & 0.7794 & 0.8023 & 0.7638 & 0.7823 & 0.7577\\
	& Accuracy &  \bf{0.9539} & 0.9399 & 0.9460 & 0.9362 & 0.9407 & 0.9363\\
	\midrule
	\multirow{4}{*}{$n_i=100$}& Precision & \bf{0.8509} & 0.8145 & 0.8259 & 0.7875 & 0.8074 & 0.7769 \\
	& Recall & \bf{0.8597} & 0.8242 & 0.8345 & 0.7866 & 0.7968 & 0.7930\\
	& $F_1$ & \bf{0.8506} & 0.8137 & 0.8234 &  0.7867 & 0.7935 & 0.7820\\
	& Accuracy & \bf{0.9609} & 0.9500 & 0.9527 & 0.9411 & 0.9441 & 0.9419\\
	\midrule		
	\bottomrule
	\end{tabular}\caption{Results of simulation study. The best result for each category is in bold. RIE denotes KRRC with the intrinsic (Riemannian) distance based Gaussian kernel.}
	\label{table:result}
	\end{table}

Classification results for $n_i = 10, 50$, and $100$ are summarized in \Cref{table:result}. Among the evaluation measures considered, $F_1$ is used when we need a balance between precision and recall. For this reason and for the sake of clarity, \Cref{fig:F1} only displays the boxplots of $F_1$ score for subspace sizes $n_i=\{10, 100\}$. The results presented are consistent with what we previously expected. In an intuitive sense, the unsatisfactory performance of the Euclidean distance based methods, such as RRC, and GLM, is mainly due to the fact that the nonlinear geometrical structure of the data space couldn't be taken into account in these methods. Moreover, according to the inadequate result produced by the KRRC with the intrinsic Riemannian Gaussian kernel, we would emphasize that the positive definiteness of kernel is essentially required to obtain better performance. 
\begin{figure}
	\centering
	\includegraphics[width=0.8\linewidth]{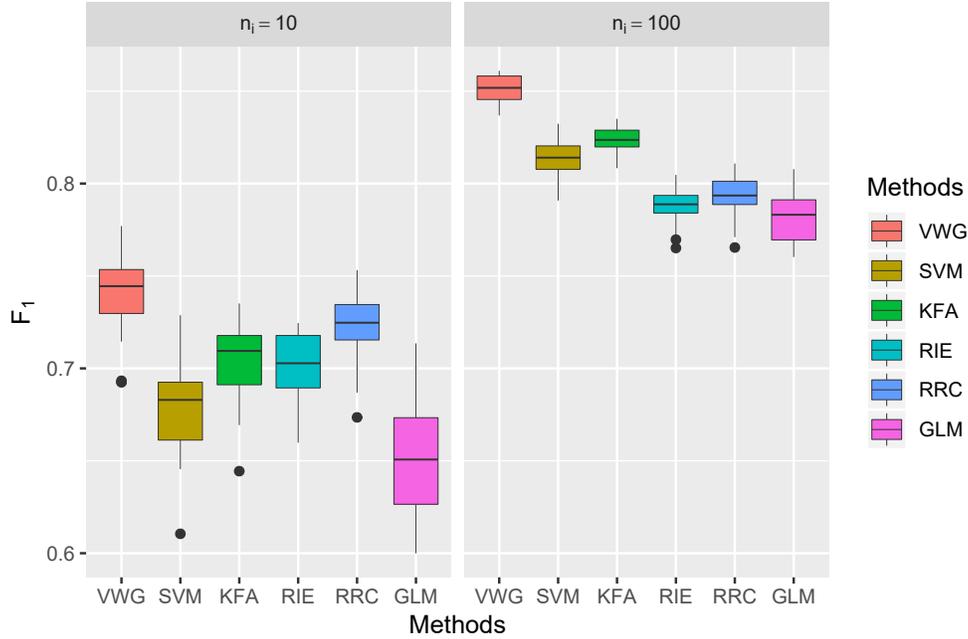}
	\caption{Results of simulation study. The left panel uses a small subspace size ($n_i=10$), and the right panel uses a large subspace size ($n_i=100$). Boxplots show the interquartile range with whiskers extending out up to 1.5 times this range, and median marked as a line in the box.}\label{fig:F1}
\end{figure} 

In addition to gain more insight into how the subspace size affects the model performance, we further investigate results by presenting \Cref{fig:classification}, in which comparisons are made using three other measures under different subspace sizes. As shown in the figure, extrinsic KRRC with VWG kernel is seen to work outstandingly well, attaining the best performance in terms of all measures. It also can be clearly observed that the performance gap between extrinsic KRRC (VWG) and other methods raised as the number of subspace size increased. Hence, besides its simpler design and implementation, extrinsic KRRC allows dealing better with large data set.

Also, the predictive ability of RKHS models such as VWG, SVM, and KFA is verified through the experimental result under a large data settting ($n_i:80 \sim 100$), on which other methods perform poorly. This indicates that, when given training data is large enough, RKHS methods are capable of capturing a nonlinear associations between the shapes of leaves and their categories.

The overriding messages of this experiment are (i) when the underlying geometry of the input space is nonlinear manifold, extrinsic KRRC (VWG) performs uniformly better than competing methods in both small and large sample sizes, and (ii) selection of an appropriate kernel is another important issue when implementing KRRC. For example, though the theoretical investigation of kernels will not be discussed here in more detail, it is worth pointing out that KRRC with intrinsic distance (Riemannian distance) based kernel shows poor performance and even worse than the naive RRC. By contrast, extrinsic VW Gaussian kernel gives a substantial boost in performance over the RRC.

\begin{figure}
	\centering
	\includegraphics[width=0.99\linewidth]{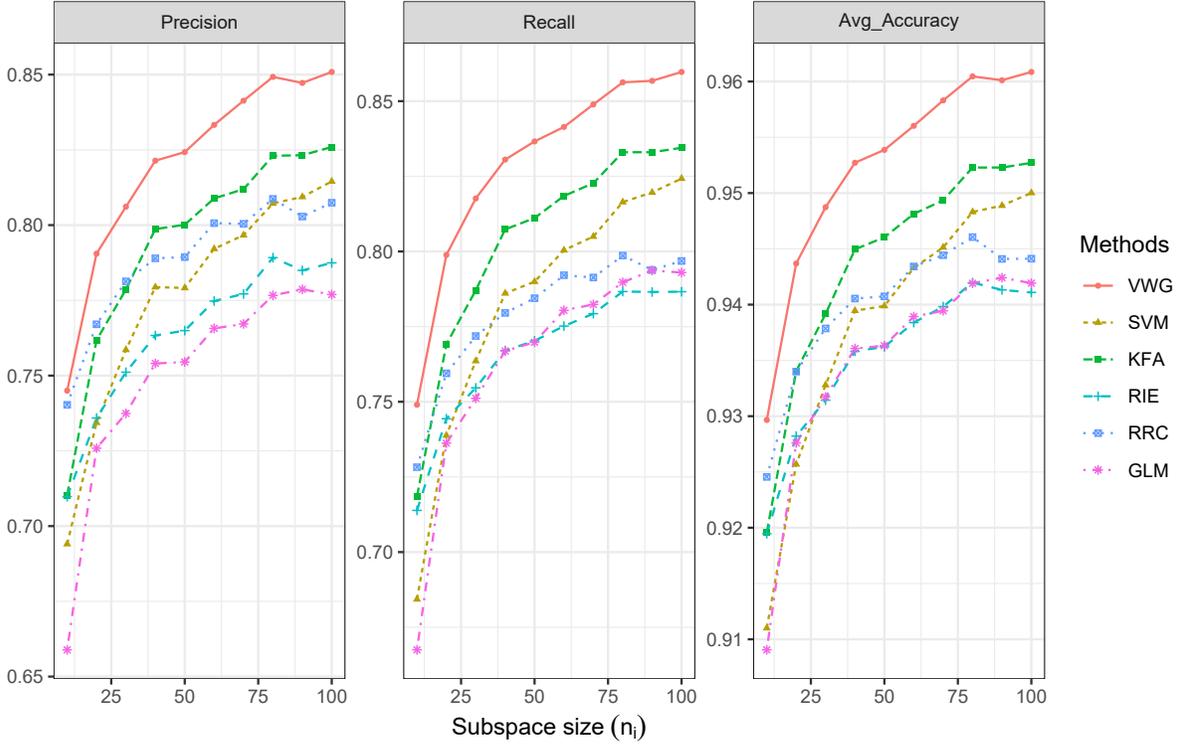}
	\caption{Plots shwow the performace of methods in terms of precision (left panel), recall(middle panel) and accuracy(rightpanel).}\label{fig:classification}
\end{figure}


\section{Conclusion}\label{sec:conclusion}

This paper addressed the classification problem on Kendall's planar shape space $\Sigma_2^k$ by proposing the extrinsic KRRC. As we have demonstrated throughout this paper, our approach stems from an attempt to develop a new kernel on $\Sigma_2^k$. It is desirable to employ extrinsic approach in aiming to provide a valid kernel on Riemannian manifolds, where a positive definite kernel may not be directly applicable. Simply taking the Euclidean distance induced by VW embedding, and capturing nonlinear patterns in the manifold valued data, the combination of the extrinsic approach and the kernel method not only guarantees the proposed kernel is positive definite, but also achieves a promising performance. 

We would like to conclude this paper by indicating the potential directions of future work. While in this paper the proposed extrinsic kernel has focused only on the Kendall's planar shape space, our approach can be extended in a natural way to other manifolds where well defined embedding into Euclidean space is available. Examples of such manifolds include the 3D projective shape space of $k$-ads denoted by $P\Sigma_3^k$, the space of symmetric positive definite matrices and the Grassmannian manifold. So it would still be of interest to further investigate that the extrinsic Gaussian kernel works well in the aforementioned manifolds. We eventually expect that without suffering from the non-positive definiteness problem associated with Gaussian kernels defined on manifolds, our extrinsic kernel approach will contribute to the development of new ways of kernelized methods on manifolds.
\bibliographystyle{apa}
\bibliography{KRRC_shape}

\end{document}